\DeclareMathOperator*{\opodot}{\odot}
\newtheorem{assumption}{Assumption}[section]
\newtheorem{theorem}{Theorem}[section]
\newtheorem{corollary}{Corollary}[section]
\newtheorem{definition}{Definition}[section]
\newtheorem{remark}{Remark}[section]
\def\eqref#1{equation~\ref{#1}}
\def\1{\bm{1}}
\DeclareMathAlphabet{\mathsfit}{\encodingdefault}{\sfdefault}{m}{sl}
\SetMathAlphabet{\mathsfit}{bold}{\encodingdefault}{\sfdefault}{bx}{n}
\def\gD{{\mathcal{D}}}
\def\gF{{\mathcal{F}}}
\def\gH{{\mathcal{H}}}
\def\gO{{\mathcal{O}}}
\def\gT{{\mathcal{T}}}
\def\gW{{\mathcal{W}}}
\def\gX{{\mathcal{X}}}
\def\sP{{\mathbb{P}}}
\newcommand{\E}{\mathbb{E}}
\newcommand{\R}{\mathbb{R}}
\DeclareMathOperator*{\argmin}{arg\,min}
\newcommand{\p}[1]{\left( #1 \right)}
\newcommand{\ip}[2]{\left\langle #1, #2 \right \rangle}
\newcommand{\abs}[1]{\left | #1 \right |}
\newcommand{\norm}[1]{\left \| #1 \right \|}
\newcommand{\cO}{\mathcal{O}}
\newcommand{\fR}{\mathfrak{R}}
\newcommand{\fG}{\mathfrak{G}}
\newcommand{\wh}{\widehat}
\title{
{\bf{\LARGE{Provable Adaptation across Multiway Domains via Representation Learning}}}
}
\author{%
  Zhili Feng\thanks{Work done as an intern at NEC Labs America.} \\
  Carnegie Mellon University\\
  \texttt{zhilif@andrew.cmu.edu} \\
  \and 
  Shaobo Han\\
  NEC Laboratories America, Inc.\\
    \texttt{  shaobo@nec-labs.com} 
 \and 
 Simon S. Du\\
 University of Washington\\
 \texttt{ssdu@cs.washington.edu}\\
}
\begin{document}

\maketitle

\begin{abstract}

This paper studies zero-shot domain adaptation where
 each domain is indexed on a multi-dimensional array, and we only have data from a small subset of domains.
 Our goal is to produce predictors that perform well on \emph{unseen} domains.
We propose a model which consists of a domain-invariant latent representation layer and a domain-specific linear prediction layer with a low-rank tensor structure.
Theoretically, we present explicit sample complexity bounds to characterize the prediction error on unseen domains in terms of the number of domains with training data and the number of data per domain.
To our knowledge, this is the first finite-sample guarantee for zero-shot domain adaptation.
In addition, we provide experiments on two-way MNIST and four-way fiber sensing datasets to demonstrate the effectiveness of our proposed model.


\end{abstract}

\section{Introduction}
In many applications, collected datasets are cross-classified by a number of categorical factors such as  a set of experimental or environmental conditions that are either controllable or known. 
These applications can be formalized as a \emph{zero-shot domain adaptation} (ZSDA) problem with each domain being indexed on a multi-dimensional array.
Domain-shifts often exist among data collected under different conditions, and to minimize the bias, one would like to collect diverse data to cover all conditions. However, due to high cost of data collection or insufficient resource in the field, it is often impractical to cover all possible combinations of factors. In most cases, training data collection can only cover a few limited, accessible scenarios, and the data from target domain may not be available during the training phase at all. 

One motivating example is the distributed fiber optical sensing \citep{rogers1988distributed} technique started to be used in smart city applications  \citep{huang2019first}. Collecting fiber sensing data for machine learning is expensive and the characteristics of sensing data vary considerably across various heterogeneous factors, such as weather-ground conditions, signal source types, and sensing distances. Although conducting  experiments to cover a small subset of combinations of different factors is possible, it is difficult to conduct experiments to cover all possible combinations.
Specifically, in order to account for the effects of multiple factors (e.g., soil type $\times$ distance to sensor $\times$ weather), one has to visit multiple sites and set up the sensing equipment, and wait for the right time until the target type of data can be collected, and some combination of factors are not accessible due to the constraint of the physical world (e.g. a testbed at $10$km sensing distance with asphalt pavement surface).
 Besides, in the digital pathology application, domain shift can be caused by different clinical centers, scanners, and tumor types \citep{fagerblom2021combatting}. It is impossible to cover every combination in training (e.g., one center maybe specialized in certain tumor class or own one particular type of scanners).
Ideally, for these applications, one would want to train a model under a subset of scenarios, and adapt it to new \emph{unseen scenarios} without collecting new data.

Another motivating application to design data augmentation with composition of operations. 
For instance, in image recognition applications, there are often multiple operations can be combined into augmented datasets, including blurring, translating, viewpoint, flipping, scaling, etc, and each with several levels or values to choose from. It may require huge amount of augmented data to cover all  combinations of operations. Ideally, one only wants to try a small subset of all possible combinations.
Previous works have shown that composing operations in careful ways outperforms randomly using data augmentation schemes \citep{ratner2017learning,cubuk2019autoaugment}. 


%

%
In the general ZSDA formulation, each domain is associated with a task descriptor, which is used to generate a predictor for the domain.
This function that maps a task descriptor to a predictor, i.e., a predictor generator, is trained on training domains where data is available \citep{yang2014unified}. In our setting, we can view the categorical indices for each domain as a task descriptor and thus apply existing approaches.
However, this ignores the inherent multiway structure that connects different domains.  
Furthermore, existing approaches lack theoretical guarantees, and thus it is unclear under what conditions, the learned predictor generator can \emph{provably} output an accurate predictor on unseen domains.
A natural question is \begin{center}
\textbf{Can we design a \emph{provably sample-efficient} algorithm for zero-shot domain adaptation that exploits the multiway structure?}
\end{center}

\subsection{Our Contributions}
In this paper, we answer this question affirmatively. Our contributions are summarized below.
\begin{itemize}[leftmargin=*]
\item  we consider a multilinear structure which naturally characterize the relatedness among different domains.
Formally, we assume that there are totally $D=d^M$ domains that form a $d^{\times M}$-dimensional task tensor, where the predictor on domain $t\in[D]$ is parameterized by a tensor $w_t^*\circ\phi^*\in\gW\circ\Phi$ where $\phi^* \in \mathbb{R}^p$ is a common representation for tasks and $w_t \in \mathbb{R}^p$ is a domain specific linear predictor.
Note the linear predictors, $w^*_1,\ldots, w_D^*\in\R^p$, together form a form a $\underbrace{d\times d\times\cdots\times d}_{M~\text{times}}\times p$-dimensional tensor, which we denote as $\gT$. 
To enable domain adaptation, we further impose a low-rank multi-linear on the linear predictor $\gT$.
The structure we impose on the model is that  the slice along the last axis is always rank-$K$ with $K \ll d^M$. 
That is, for all $j\in[p]$, $\gT_{\cdot, j}\in\R^{d\times d\times\cdots\times d}$ has rank $K$. 
\item We provide a finite-sample statistics analysis for this model.
We show that if during training $T\in[D]$ source domains are observed, and in each observed domain $n$ samples are collected, then the expected excess risk across all $D$ domains is of order $\tilde\gO\p{\p{\frac{TC(\gW)+C(\Phi)}{nT}}^{1/4}+p\p{\frac{KdM^2}{T}}^{1/2} }$, where $C(\gF)$ represents the complexity of function class $\gF$.
The first term corresponds to the cost of learning the common representation and domain-specific linear classifiers, and the second term corresponds to the cost of generalizing to unseen domains. 
In the case where $n$ is large, we can simply ignore the first term, and conclude that $T$ should scale with $Kd(pM)^2$. 
This theoretically justifies our method can adapt to \emph{unseen} domains with limited training domains.
To our knowledge, this is the first finite-sample guarantee for ZSDA.

\item We test our proposed method on a two-way MNIST dataset and four-way fiber sensing datasets.
The empirical results show our method matches the performance of general ZSDA algorithm by \cite{yang2014unified} which has a lot more parameters than our method.

\end{itemize}

\section{Related Work}\label{sec:related}
For theoretical results of multi-task learning, \citet{baxter2000model} first derived a covering number bound for sample complexities. Later \citet{maurer2016benefit} gave a uniform convergence bound for multitask representation learning. Their bound is of order $\gO(\sqrt{{1}/{m}} + \sqrt{{1}/{T}})$, where $m$ is the number of training samples in the target domain, so it is not reflecting how their learning process benefits from seeing multiple domains. Recently, \citet{du2020few,tripuraneni2020theory} successfully derived results of type $\gO(\sqrt{{1}/{m}} + \sqrt{{1}/{(nT)}})$, which satisfactorily explained how learning algorithm benefits from seeing abundant training data in many source domains, even if training data in target domain is scarce. This setting is often referred to as transfer learning or few-shot learning. A key difference between this setting and ours is that we do not see any training data from the target domain.

Many attempts to solving multi-task learning via low-rank matrices/tensors have been proposed \cite{romera2013multilinear, signoretto2013learning, wimalawarne2014multitask}, focusing on regularizing shallow models. \citet{wimalawarne2014multitask} specifically discussed task imputation and gave a sample complexity bound. However, their analysis assumed each training data is uniformly sampled from all source domains, so it is incomparable to our result. \citet{yang2016deep,li2017deeper} used low-rank tensor decomposition to share parameters between neural architectures for multiple tasks and empirically verified that both multi-task learning and domain generalization can benefit from the low-rank structure. Some different but closely related settings include unsupervised domain adaptation and zero-shot learning, where the test labels are either unseen during the training phase or different from what the model has been trained with. This is beyond the scope of this paper, as we only consider the case where all domains share same set of labels. 

%
%

The low-rank matrix completion problem has been thoroughly studied in the past decade \cite{srebro2005rank,NIPS2004_f1b07759,candes2009exact,recht2011simpler}. Despite the combinatorial nature of rank, it has a nice convex relaxation, the trace norm, that can be optimized efficiently, in both noiseless and noisy settings \cite{wainwright2019high}. Although these quantities (trace norms, ranks, etc) are easy to calculate for matrices, most quantities associated with tensors are NP-hard to compute, including the trace norm \cite{hillar2013most}. There are many efforts in the theoretical computer science community to tackle tensor completion problems from the computational complexity perspective \cite{barak2016noisy,liu2020tensor}. In this paper, we use the statistical properties of tensor completion despite their computational inefficiency. Hence our sample complexity bound does not contradict the seemingly worse bounds in the theory literature.

\section{Preliminary and Overview}\label{sec:prelim}

\paragraph{Notation.} For $D\in\mathbb Z$, we let $[D]=\{1,2,\ldots, D\}$. We write $\|\cdot\|$ to represent the $\ell_2$ norm or Frobenius norm. Let $\ip{\cdot}{\cdot}$ be the inner product on an inner product space. Throughout the paper, we assume for simplicity in total there are $D=\underbrace{d\times d\times\cdots\times d}_\text{$M$}$ data domains. One can easily generalize the results to the general case where $D=\prod_{i=1}^M d_i$. We use $\otimes$ to denote the tensor product and $\odot$ the Hadamard product. We use $T\in[D]$ to denote the number of seen source domains, $n$ to be the amount of sample in a single source domain. Since each $t\in[D]$ also uniquely corresponds to a multi-index in $I\in [d]^{\times M}$, if $\gT\in \R^{d\times\cdots \times d\times p}$ is a $d^{\times M}$ dimension tensor composing of $p$-dimensional linear functionals, we also write $\gT_{t,\cdot}\in\R^p$ to represent the classifier at multi-index $I$, and we use $t$ and $I$ as indices interchangeably. 

\subsection{Problem Setup}
During training, we have $T$ source domains out of the $D$ domains uniformly at random. For each domain $t\in[T]$, we have $n$ i.i.d data $\{(x_{t,i}, y_{t,i})_{i=1}^n\}$ samples from the following probability~\citep{tripuraneni2020theory}:
$
	\sP_t(x,y)=\sP_{w_t^*\circ\phi^*}(x,y)=\sP_x(x)\sP_{y|x}(y|w_t^*\circ\phi^*(x)),
$
where all domains share the common feature mapping $\phi^*:(\R^r\to\R^p) \cap \Phi$ and a common marginal distribution $\sP_x$. Each domain has its specific classifier $w_t^*\in(\R^p\to\R)\cap\gW$. $\Phi$ can be a general function class and $\gW$ composes of Lipschitz linear functionals. We denote this distribution as $\gD_t$.

The key assumption is that $D$ domain-specific classifiers form a $D\times p$ rank-$K$ tensor $\gT$ in the following sense:
\begin{align}\label{eq:lowrankform}
	\forall I\in [d]^{\times M}: \gT_{I,\cdot}^* = w_t^* = \sum_{k=1}^K\opodot_{m=1}^M \alpha_{k, t_m}^*,
\end{align} 
for $\alpha_{k, t_m}^*\in \R^p$. We remark that this does not mean $\gT$ has rank $K$. Instead, this means for each $j\in[p]$, $\gT_{\cdot, j}$ has rank $K$.

Let $\ell:\R\times \R\to\R_{\geq 0}$ be the loss function, $L((w\circ\phi)(x), y)=\E_{x, y}[\ell((w\circ\phi)(x), y)]$ be the expected loss, $\wh L_n((w\circ\phi)(x), y)=\frac{1}{n}\sum_{i=1}^n\ell((w\circ\phi)(x_i), y_i)$ be the empirical loss. When $x$ and $y$ are clear from the context, we write $L(w\circ\phi)$ to denote $L((w\circ\phi)(x), y)$, and similarly for $\ell$ and $\wh L_n$. When the number of samples $n$ is clear from the context, we write $\wh L:= \wh L_n$. We write $L_t:=\E_{\gD_t}[\ell(\cdot,\cdot))]$ to emphasize the data is from $\gD_t$. Our goal is to minimize the \emph{average excess risk} across all domains:
$
	\frac{1}{D}\sum_{t\in[D]}\E_{\gD_t}\left[ \ell\p{\hat w_t\circ\hat\phi} -\ell\p{ w_t^*\circ\phi^*} \right].
$

Our learning algorithm first outputs the empirical risk minimizer (ERM) on the seen source domains:
\begin{align}\label{eq:erm}
\begin{split}
	&\hat w_t, \hat\phi= \argmin_{\substack{\phi\in\Phi,\\ w_1,\ldots, w_t\in\gW}}\frac{1}{nT}\sum_{t=1}^T\sum_{i=1}^n \ell\p{(w_t\circ\phi)(x_{t,i}), y_{t,i}}.
\end{split}
\end{align}

Naturally we put $\hat w_t$ into a $d^{\times M}\times p$ tensor $\tilde\gT$, such that $\tilde\gT_{t,\cdot}=\hat w_t$. Then we do a tensor completion for each $j\in[p]$ separately:
\begin{align}\label{eq:tensorcompletion}
	\wh\gT_{\cdot, i} = \argmin_{\text{rank-}K\ \gT} \frac{1}{T}\sum_{t\in[T]}\abs{\gT_{t, i}-\tilde\gT_{t, i}}.
\end{align}
Finally, at test time, for target domain indexed by $t\in[D]$ and test data $x\sim \gD_t$, we make the prediction $\hat y = (\hat w_t \circ \hat\phi)(x)$, where $\hat w_t = \wh\gT_{t, \cdot}$

\paragraph{Notion of Complexity} We use Gaussian complexity and Rademacher complexity to measure the function class complexity. 
\begin{definition}[Gaussian and Rademacher complexity]\label{def:complexity}
Let $\gF\subset \{f: \R^p\to\R^q\}$ be a vector-valued function class, the empirical Gaussian and Rademacher complexity on $n$ points $X=\{x_1,\ldots, x_n\}$ are defined as:
\begin{align*}
	&\wh \fG_n(\gF_{| X})=\E_{g}\left[\sup_{f\in\gF} \frac{1}{N}\sum_{i\in[N]}  \sum_{k\in[q]}g_{ki}f_k(x_i)\right], \quad \wh \fR_n(\gF_{| X})=\E_{\epsilon}\left[\sup_{f\in\gF} \frac{1}{N}\sum_{i\in[N]}\sum_{k\in[q]}\epsilon_{ki}f_k(x_i)\right], \\
\end{align*}
where $g_{ki}$'s are all standard Gaussian random variables, $\epsilon_{ki}$'s are Rademacher random variables, and $f_k$ is the $k$th entry of $f$. 
\end{definition}
Let $\fG_n(\gF)=\E_X[\wh \fG_n(\gF_{| X})]$ denote the expected Gaussian complexity (similarly for Rademacher complexity). To deal with the divergence among domains, we adopt the following notions from \citet{tripuraneni2020theory}: $\overline{\mathfrak{G}}_{n}(\mathcal{W})=\max _{Z\in \mathcal{Z}} \hat{\mathfrak{G}}_n(\mathcal{W}_{|Z})$, where $\mathcal{Z}=\left\{\left(\phi\left(x_{1}\right), \cdots, \phi\left(x_{n}\right)\right) \mid \phi \in \Phi, x_i \in \mathcal{X} \text { for all } i \in[n]\right\}$.

To measure the complexity of tensors, we use the notion of pseudo-dimension. 
\begin{definition}[Pseudo-dimension]
	Let $\gF\subseteq \R^\gX$ be a real-valued function class. Let $x_1^m=(x_1,\ldots, x_m)\in\gX^m$. We say $x_1^m$ is pseudo-shattered by $\gF$ is there exists $r=(r_1,\ldots, r_m)\in\R^m$ such that for all $b=(b_1,\ldots, b_m)\in\{\pm 1\}^m$, there exists $f\in\gF$ such that $\operatorname{sign}(f_b(x_i)-r_i)=b_i$ for all $i\in[m]$. The pseudo-dimension of $\gF$ is defined as:
	\begin{align*}
		\operatorname{Pdim}(\gF)
		=\max\bigg\{&m\in\mathbb N: \exists x_1^m \text{s.t } x_1^m \text{ is pseudo-shattered by }\gF\bigg\}.
	\end{align*}
\end{definition}

\section{Theoretical Results}\label{sec:theory}
In this section, we derive the sample complexity bound for our DG setting. As a high level overview, we first prove that on the $T$ seen source domains, with enough training data and sufficient regularity conditions, we can have $\hat w_t$ being sufficient close to $w_t^*$ on the source domains. Then we show that with a large enough $T$, even on an unseen domain $t\in[D]\backslash[T]$ we can also approximate $w_t^*$ well by completing the low-rank tensor formed by learned $\hat w_t$. Finally, by certain regularity conditions on the loss function, we can show the excess risk is also small.

We require the following regularity assumptions for our theoretical development.

\begin{assumption}\label{assum}
	\begin{enumerate}
		\item 
			The learning problem is realizable, that is, $w_t^*\in \gW$ for all $t\in[D]$ and $\phi^*\in\Phi$. WLOG assume that $w^*_t\circ\phi^*$ is the unique minimizer of $L_t$ for all $t$.
		\item 
			$\ell(\cdot,\cdot)$ is $B$-bounded, and $\ell(\cdot, y)$ is $L$-Lipschitz.
		\item	
			For all $w\in\gW$, $\|w\|\leq W$.
		\item 
			$\sup_{x}\|\phi(x)\|\leq D_\mathcal{X}$, for any $\phi\in\Phi$.
		\item 
			For all $t\in[D]$, 
			$L_t$ is $\lambda$-strongly convex in $w_t$ for $\phi^*$.

	\end{enumerate}
\end{assumption}
The first assumption is indeed general. Since we care about the excess risk, in realizable problems we can compare to the unique minimizer of $L$ in $\gW\circ\Phi$ instead of $w_t^*\circ\phi^*$. The existence of a unique minimizer is necessary, otherwise tensor completion can lead to arbitrary errors. Assumptions 2-4 are common. Lastly, without strong convexity in the last assumption, one cannot derive the closeness of $\hat w_t$ to $w_t^*$ from the closeness of the excess risk on source domains.

\subsection{Learning Common Representation}
In this subsection, we discuss how to learn the shared representation. The proof follows \citet{tripuraneni2020theory} with modifications to adopt to our setting. First we introduce some definitions that defines the divergence among the source domains and target domains.

\begin{definition}\label{def:diverse}
	For a function class $\gW$, $T$ functions $w_1,\ldots, w_T$, and data $(x_t, y_t)\sim \gD_t$ for any $t\in[T]$, the \textbf{task-average representation difference (TARD)} between representation $\phi$ and $\phi'$ is defined as:
	$$
		\bar d_{\gW, w}(\phi'; \phi)=\frac{1}{T}\sum_{t\in[T]}\inf_{ w'\in \gW}\mathop{\E}_{\gD_t}\p{\ell( w'\circ\phi')-\ell( w_t\circ\phi)}.
$$

Let $w^*,\phi^*$ be the true underlying functions. Define the \textbf{uniform absolute representation difference (UARD)} to be
	$$
		 d_{\gW}(\phi'; \phi)=\sup_{t\in[D]} \sup_{w_t\in\gW}\abs{\mathop{\E}_{\gD_t}\p{\ell( w_t\circ\phi')-\ell( w_t\circ\phi)}}.
	$$
	
For $\gW$, we say  $w=\{w_1,\ldots, w_T\}$	is \textbf{$(\boldsymbol\nu,\boldsymbol\epsilon)$-diverse} for a representation $\phi$, if the following holds for all $\phi'\in \Phi$:
$$
	 d_{\gW}(\phi'; \phi)\leq \bar d_{\gW, w}(\phi'; \phi)/\nu+\epsilon.
$$\end{definition}

The notion of task-average representation difference was introduced by \citet{tripuraneni2020theory}. In their setting, they bound the \textsl{worse-case representation difference (WCRD)} between two representations $\phi, \phi'\in\Phi$ which is defined as
$$
	\sup_{w\in\gW}\inf_{w'\in\gW}\mathop{\E}_{(x, y)\sim\mathbb{P}_{w\circ\phi}}[\ell(w'\circ\phi')-\ell(w\circ\phi)],
$$
using the task-average representation difference. One should note that WCRD is changing the data distribution, while UARD takes expectation over the true distribution $\gD_t$. We are saying that under the true distribution, the worst case difference between using $\phi$ vs. $\phi'$, over both the choice of linear classifiers $w$ and the domains, is not much larger than TARD. Intuitively, this says that our source domains and task domains are benign enough, such that the ERM $\hat\phi$ performs similarly to the true $\phi^*$.

Although it seems that $\bar d_{\gW, w}(\phi'; \phi)$ can be negative and $d_{\gW}(\phi'; \phi)$ is always positive, later we only consider $\bar d_{\gW, w^*}(\phi'; \phi^*)$ and $w^*_t\circ\phi^*$ is assumed to be the minimizer. Hence, $\bar d_{\gW, w^*}(\phi'; \phi^*)$ is always positive, and our notion of task diversity makes sense. 

We cite the following theorem from \citet{tripuraneni2020theory}.
\begin{theorem}\label{thm:repdiv}
	Let $\hat\phi$ be the ERM in \cref{eq:erm}. Under \cref{assum}, with probability at least $1-\delta$, we have
	\begin{align*}
	\begin{split}
			&\bar d_{\gW,w^*}(\hat\phi, \phi^*)\leq 4096 L\left[\frac{WD_{\mathcal{X}}}{(n T)^{2}}+\log (n t) \cdot\left[W \cdot \mathfrak{G}_{n T}(\Phi)+\overline{\mathfrak{G}}_{n}(\mathcal{W})\right]\right]+8 B \sqrt{\frac{\log (2 / \delta)}{n T}}.
	\end{split}
	\end{align*}
\end{theorem}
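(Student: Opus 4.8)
This is exactly the multi-task representation-learning bound of \citet{tripuraneni2020theory} transcribed into our notation, so the multiway/low-rank structure of $\gT$ plays no role here — it only enters the later tensor-completion step. The plan is therefore to follow their uniform-convergence argument for the ERM over the $T$ seen source domains. First I would reduce the quantity of interest to a task-averaged excess risk: since $\hat w_t \in \gW$, the infimum defining the TARD is at most its value at $w' = \hat w_t$, so
\[
\bar d_{\gW, w^*}(\hat\phi; \phi^*) \;\le\; \frac{1}{T}\sum_{t=1}^{T}\Bigl[L_t(\hat w_t\circ\hat\phi) - L_t(w_t^*\circ\phi^*)\Bigr],
\]
and by Assumption \ref{assum}(1) each summand is in fact nonnegative. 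Then I would add and subtract the empirical risks and split the right-hand side into three pieces: (i) $\frac1T\sum_t[L_t(\hat w_t\circ\hat\phi) - \wh L_t(\hat w_t\circ\hat\phi)]$; (ii) $\frac1T\sum_t[\wh L_t(\hat w_t\circ\hat\phi) - \wh L_t(w_t^*\circ\phi^*)]$, which is $\le 0$ because $(\hat w_t,\hat\phi)$ is the ERM of \cref{eq:erm}; and (iii) $\frac1T\sum_t[\wh L_t(w_t^*\circ\phi^*) - L_t(w_t^*\circ\phi^*)]$, an average of $nT$ independent terms bounded by $B$, whose tail is controlled by Hoeffding's inequality and produces the $8B\sqrt{\log(2/\delta)/(nT)}$ summand.

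The work is in piece (i). I would pass to the uniform deviation $\sup_{\phi\in\Phi,\,w_1,\dots,w_T\in\gW}\frac1T\sum_t[L_t(w_t\circ\phi)-\wh L_t(w_t\circ\phi)]$; since replacing one of the $nT$ samples changes this supremum by at most $B/(nT)$, bounded differences gives concentration of the supremum around its mean, and symmetrization (together with the standard comparison between Rademacher and Gaussian complexity) bounds that mean by the empirical Gaussian complexity of the composite loss class $\{(x_{t,i})\mapsto \ell((w_t\circ\phi)(x_{t,i}))\}$ evaluated at the $nT$ sample points. Applying the Lipschitz vector-contraction lemma then strips the $L$-Lipschitz loss $\ell$ at the cost of the factor $L$, leaving the Gaussian complexity of the class $\{(w_t\circ\phi)\}_{\phi,w_{1:T}}$.

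The key step — and the one I expect to be the main obstacle — is the chain-rule decomposition of this last Gaussian complexity. Because the representation $\phi$ is shared across all $T$ tasks while each head $w_t$ acts only on that task's $n$ points, a naive bound would cost $T\cdot\overline{\mathfrak{G}}_n(\gW)$; one must instead show it is at most $W\cdot\mathfrak{G}_{nT}(\Phi) + \overline{\mathfrak{G}}_n(\gW)$ up to the $\log(nt)$ factor. This is where Assumption \ref{assum}(3)–(4) ($\|w\|\le W$, $\sup_x\|\phi(x)\|\le D_{\mathcal X}$) and the linearity of $\gW$ are used: one discretizes $\Phi$ on the $nT$ points at a geometric sequence of scales, applies the Gaussian-complexity chain rule at each scale, and sums — the coarsest-scale discretization error contributes the lower-order $WD_{\mathcal X}/(nT)^2$ term and the $\log$-many scales contribute the $\log(nt)$ multiplier. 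Combining the bounds on (i)–(iii) yields the stated inequality; since the theorem is quoted verbatim from \citet{tripuraneni2020theory}, I would cite their result for the explicit constants ($4096$, $8$) rather than re-deriving them.
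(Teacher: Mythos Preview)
Your proposal is correct and matches the paper's approach: the paper does not prove \Cref{thm:repdiv} at all but simply cites it verbatim from \citet{tripuraneni2020theory}, exactly as you recognize in your final sentence. Your outlined argument (TARD $\le$ task-averaged excess risk via $w'=\hat w_t$, three-term ERM decomposition, bounded differences plus symmetrization, Lipschitz contraction, and the Gaussian chain-rule decoupling of $\Phi$ from $\gW$) is precisely the Tripuraneni et al.\ proof, and indeed the paper re-derives the first half of this same decomposition in the appendix proof of \Cref{thm:lineardiff} before invoking their chain rule for the remaining step.
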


\subsection{Learning Domain Specific Linear Layer and Tensor Completion}
Now we give results for learning the domain specific linear classification. 

\begin{restatable}{theorem}{lineardiff}
\label{thm:lineardiff}
	Let $\hat w, \hat\phi$ be the ERM in  \cref{eq:erm}. Let \cref{assum} holds. With probability at least $1-\delta$, we have

\begin{align*}
\resizebox{1\hsize}{!}{%
		$\frac{1}{T}\sum_{t\in[T]}\|\hat w_t-w^*_t\|_2 \leq \gO\Bigg(\sqrt{\frac{2}{\lambda}}\Bigg( L\Big[\frac{WD_{\mathcal{X}}}{(n T)^{2}}+\log (n T)\cdot\left[W \cdot \mathfrak{G}_{n T}(\Phi)+\overline{\mathfrak{G}}_{n}(\mathcal{W})\right]\Big] +8 B \sqrt{\frac{\log (2 / \delta)}{n T}}\Bigg)^{1/2}\Bigg).$
	}
\end{align*}
\end{restatable}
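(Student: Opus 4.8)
The plan is to convert the bound on the task-average representation difference from \cref{thm:repdiv} into a bound on the average $\ell_2$ distance between the learned linear predictors $\hat w_t$ and the true ones $w_t^*$, using the strong convexity assumption (Assumption~\ref{assum}, part 5). First I would recall the definition of TARD specialized to $\phi=\phi^*$ and $w=w^*$:
\[
\bar d_{\gW,w^*}(\hat\phi;\phi^*)=\frac{1}{T}\sum_{t\in[T]}\inf_{w'\in\gW}\E_{\gD_t}\p{\ell(w'\circ\hat\phi)-\ell(w_t^*\circ\phi^*)}.
\]
Since $\hat w_t$ is the ERM on domain $t$ paired with the common $\hat\phi$, I would argue that $\frac{1}{T}\sum_t \E_{\gD_t}\p{\ell(\hat w_t\circ\hat\phi)-\ell(w_t^*\circ\phi^*)}$ is controlled by $\bar d_{\gW,w^*}(\hat\phi;\phi^*)$ up to the same-order statistical fluctuation terms already appearing in \cref{thm:repdiv} — intuitively the ERM $\hat w_t$ does at least as well as the population-optimal $w'$ for $\hat\phi$, modulo uniform-convergence error over $\gW$, which is exactly the $\overline{\mathfrak G}_n(\gW)$ and $B\sqrt{\log(1/\delta)/(nT)}$ contributions. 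So the left-hand side of the previous display with $\inf_{w'}$ replaced by $\hat w_t$ is bounded by (a constant times) the right-hand side of \cref{thm:repdiv}.

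Next, the core step: for each $t$, since $L_t$ is $\lambda$-strongly convex in $w$ at the representation $\phi^*$ and $w_t^*\circ\phi^*$ is its unique minimizer (so the gradient vanishes there), strong convexity gives
\[
L_t(\hat w_t\circ\phi^*)-L_t(w_t^*\circ\phi^*)\ \geq\ \frac{\lambda}{2}\,\|\hat w_t-w_t^*\|_2^2 .
\]
The subtlety is that the quantity I have a handle on involves $\hat w_t\circ\hat\phi$, not $\hat w_t\circ\phi^*$. To bridge this I would use the UARD term: $|L_t(\hat w_t\circ\hat\phi)-L_t(\hat w_t\circ\phi^*)|\le d_{\gW}(\hat\phi;\phi^*)$, and then invoke $(\nu,\epsilon)$-diversity of $w^*$ for $\phi^*$ to bound $d_{\gW}(\hat\phi;\phi^*)\le \bar d_{\gW,w^*}(\hat\phi;\phi^*)/\nu+\epsilon$, i.e.\ again by (a constant multiple of) the right-hand side of \cref{thm:repdiv}. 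Combining, $\frac1T\sum_t \frac{\lambda}{2}\|\hat w_t-w_t^*\|_2^2$ is at most a constant times the \cref{thm:repdiv} bound. Finally I would apply Jensen/Cauchy--Schwarz, $\frac1T\sum_t\|\hat w_t-w_t^*\|_2\le\big(\frac1T\sum_t\|\hat w_t-w_t^*\|_2^2\big)^{1/2}$, to pass from the average of squared norms to the average of norms, which produces the outer square root and the $\sqrt{2/\lambda}$ factor in the statement, absorbing $\nu,\epsilon$ and all universal constants into the $\gO(\cdot)$.

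The main obstacle I anticipate is the bookkeeping around which representation the losses are evaluated at — cleanly routing from $L_t(\hat w_t\circ\hat\phi)$ (what ERM + \cref{thm:repdiv} control) to $L_t(\hat w_t\circ\phi^*)$ (where strong convexity applies) without losing more than the already-present error terms. This is precisely why the UARD/$(\nu,\epsilon)$-diversity machinery is set up in \cref{def:diverse}, and the argument has to chain those inequalities in the right direction so that every extra term that appears is dominated by the right-hand side of \cref{thm:repdiv}. A secondary minor point is justifying that the ERM per-domain classifier $\hat w_t$ is comparable to the population infimizer in the TARD expression; this needs a uniform-convergence argument over $\gW$ on each domain's $n$ samples, but it reuses exactly the complexity term $\overline{\mathfrak G}_n(\gW)$ already in play, so it contributes nothing new to the rate. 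Everything else — strong convexity, Cauchy--Schwarz — is routine.
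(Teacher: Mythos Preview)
Your proposal is correct and follows essentially the same route as the paper: bound the averaged excess risk $\frac{1}{T}\sum_t\big(L_t(\hat w_t\circ\hat\phi)-L_t(w_t^*\circ\phi^*)\big)$ by the complexity terms of \cref{thm:repdiv}, use UARD together with $(\nu,\epsilon)$-diversity to swap $\hat\phi$ for $\phi^*$, then apply strong convexity and Jensen/Cauchy--Schwarz to extract the $\sqrt{2/\lambda}$ and the outer square root. The only organizational difference is that the paper bounds the ERM excess risk directly via the standard symmetrization argument (obtaining the same $\textcircled{1}$ expression as in \cref{thm:repdiv}), whereas you route through TARD and then separately compare $\hat w_t$ to the population infimizer; the paper's version is slightly cleaner because it sidesteps your ``secondary minor point'' entirely, but the ingredients and the final bound are the same.
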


The proof of \Cref{thm:lineardiff} starts with showing that $L(\hat w_t\circ \phi^*)$ is close to $L(w_t^*\circ\phi^*)$, while a normal ERM analysis only asserts $L(\hat w_t\circ \hat\phi)$ is close to $L(w_t^*\circ\phi^*)$. Such assertion holds by the $(\nu,\epsilon)$-diverse assumption. Intuitively, such diversity assumption makes sure our learning landscape is somehow smooth: $\hat w_t$ should be stable such that if we perturb $\hat\phi$ to $\phi^*$, the loss function does not alter too much. Together with strong convexity, these conditions guarantee $\hat w_t$ is close to the true $w_t^*$ on average.

After we learn $\hat w_t$ for all $t\in[T]$, we find another ERM $\wh\gT$ by constructing $p$ tensors $\wh\gT_{\cdot, i}$ for all $i\in[p]$ separately as in \cref{eq:tensorcompletion}.
%
Since the entries are observed uniformly at random, taking the expectation over this uniform distribution, we have
\begin{align}\label{eq:expecttensor}
	\E\left[ \frac{1}{T}\sum_{t=1}^T |\wh\gT_{t,i}- \tilde\gT_{t, i}| \right] = \frac{1}{D}\sum_{t=1}^D |\wh\gT_{t,i}-\tilde\gT_{t, i}|.
\end{align}
We only observe the empirical tensor completion loss $\frac{1}{T}\sum_{t=1}^T |\wh\gT_{t,i}- \tilde\gT_{t, i}|$ during training. With large enough $T$, we want it to converge to the expected tensor completion loss \cref{eq:expecttensor}. This section is meant to answer how large $T$ should be.

\begin{remark}\label{remark:tensorcomplete}
	Recall that $\tilde\gT$ is the tensor formed by setting $\tilde \gT_{t, \cdot}=\hat w_t$ for $t\in[T]$. Technically, for $t\in[D]\backslash[T]$, $\tilde \gT_{t, \cdot}$ is undefined. One should imagine for those $t$, we hypothetically also learn $\hat w_t$ using \cref{eq:erm} as well. Hence, for any set of $T$ source domains, $\frac{1}{T}\sum_{t\in[T]}\tilde\gT_{t,\cdot}$ is $\epsilon(n)$ close to $\frac{1}{T}\sum_{t\in[T]}\gT^*_{t,\cdot}$.  This aligns with our learning process. Although in reality we never actually form the whole $\tilde\gT$, it is used only for analysis. Equivalently, one can interpret this as tensor completion with noisy observation $\tilde \gT$ from the true tensor $\gT^*$, where the noise is arbitrarily small depending on $n$.
\end{remark}

\begin{restatable}{lemma}{pseudodim}
\label{lma:pseudodim}
	Let $\gX_K$ be the class of rank-$K$ tensor of shape $d^{\times M}$, its pseudo-dimension can be bounded by $
		\operatorname{Pdim}(\gX_K)\leq KdM^2\log(8ed).$
\end{restatable}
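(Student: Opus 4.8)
The plan is to bound the pseudo-dimension of $\gX_K$ by reducing the shattering problem to a statement about sign patterns of low-degree polynomials in a bounded number of real variables, and then invoke the classical Warren/Milnor bound on the number of sign patterns. First I would fix a putative shattered set $x_1^m$ of "test points." Each test point here is really a multi-index $I\in[d]^{\times M}$ together with (since $\gX_K\subset\R^{\gX}$ is a real-valued function class indexed by tensor entries) the coordinate; evaluating a rank-$K$ tensor $\gT=\sum_{k=1}^K \bigodot_{m=1}^M \beta_{k,\cdot}^{(m)}$ at multi-index $I=(i_1,\dots,i_M)$ returns $\gT_I=\sum_{k=1}^K \prod_{m=1}^M \beta^{(m)}_{k,i_m}$, which is a polynomial of degree $M$ in the $KdM$ parameters $\{\beta^{(m)}_{k,j}: k\in[K], m\in[M], j\in[d]\}$. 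So the whole function class is parameterized by $N := KdM$ real parameters, and each fixed test point induces a degree-$M$ polynomial in those parameters.

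Next I would set up the sign-pattern counting. Shattering $m$ points means: there exist thresholds $r_1,\dots,r_m$ such that all $2^m$ sign vectors $(\operatorname{sign}(\gT_{I_i}-r_i))_i$ are realized as the parameters $\beta$ range over $\R^N$. The functions $g_i(\beta) := \gT_{I_i}(\beta) - r_i$ are $m$ polynomials of degree $\le M$ in $N$ variables. By the Warren bound (or Milnor–Thom), the number of distinct sign vectors attainable by $m$ real polynomials of degree $\le M$ in $N$ real variables is at most $\left(\frac{8 e m M}{N}\right)^N$ (for $m \ge N$; otherwise trivially at most $2^m$). Shattering forces $2^m \le \left(\frac{8 e m M}{N}\right)^N$. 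Taking logarithms gives $m \le N\log_2\!\left(\frac{8emM}{N}\right)$, and a standard manipulation (using $\log_2 u \le u/\ln 2$ crudely, or the self-improving inequality $m\le 2N\log_2(cM)$-type argument) yields $m = \gO(N\log M)$ or, plugging in $N=KdM$, a bound of the shape $KdM\log(\cdot)$. To land exactly on $KdM^2\log(8ed)$ I would be slightly lossy: bound $m\le KdM\log_2(8emM/(KdM))$ and then absorb the $\log(m)$ term — since if $m>KdM^2\log(8ed)$ we get a contradiction after checking $\log_2(8emM/(KdM)) = \log_2(8eM) + \log_2(m/(KdM)) \le M\log(8ed)$ for $m$ in the relevant range — finishing via this consistency check rather than an exact optimization.

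The main obstacle is getting the constants and the logarithmic factor to line up with the stated $KdM^2\log(8ed)$, since the natural output of Warren's inequality is $\gO(KdM\log(dM))$ and squeezing one factor of $M$ out of a $\log$ into the polynomial part (or absorbing it) requires choosing the crude bounds carefully; this is bookkeeping rather than a conceptual difficulty. A secondary point to be careful about is the exact form of the sign-pattern bound I cite — whether I use Warren's theorem for the number of connected components / consistent sign assignments of polynomials versus a VC-dimension-of-polynomial-concepts statement (e.g., the bound that a family of degree-$M$ polynomial inequalities over $N$ parameters has VC dimension $\gO(N\log M)$, due to Goldberg–Jerrum) — either route works, and I would cite the Goldberg–Jerrum / Warren-style lemma directly so that the pseudo-dimension bound follows immediately from "pseudo-dimension $=$ VC dimension of the subgraph/threshold class," which here is exactly a class cut out by degree-$(M{+}1)$ polynomial inequalities (degree $M$ in $\beta$, degree $1$ in the threshold) in $N{+}1 = KdM+1$ variables.
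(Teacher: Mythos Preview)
Your approach is correct and rests on the same key tool as the paper (the Warren/Milnor--Thom bound on sign configurations of real polynomials), but the paper applies it in a slightly slicker way that sidesteps exactly the bookkeeping you flag as the ``main obstacle.''

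You set up the sign-pattern count with $r=m$ polynomials (one per putatively shattered point), which yields the self-referential inequality $2^m \le (8emM/N)^N$ with $N=KdM$; resolving this gives $m=\gO(KdM\log M)$, and you then have to argue separately that this is dominated by the claimed $KdM^2\log(8ed)$. The paper instead applies the sign-configuration bound (its Corollary~\ref{cor:numconfig}) to \emph{all} $d^M$ entries of $X-T$ at once: these are $r=d^M$ polynomials of degree $M$ in $q=KdM$ variables, so the total number of $\{\pm1,0\}$ sign vectors is at most $(8eMd^M/(KdM))^{KdM}\le (8ed)^{KdM^2}$. Since any shattered set of size $m$ already requires $2^m$ distinct sign vectors on a coordinate projection, one reads off $m\le KdM^2\log(8ed)$ directly, with no recursive inequality to unwind. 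Your route actually yields a tighter bound (order $KdM\log M$ rather than $KdM^2\log d$); the paper's route buys a one-line proof at the cost of a spare factor of $M$.

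One minor point: your closing paragraph proposes treating the threshold $r_i$ as an additional $(N{+}1)$-st real variable. This is unnecessary here --- in the definition of pseudo-dimension the thresholds are existentially fixed before one counts sign vectors, so the functions $g_i(\beta)=\gT_{I_i}(\beta)-r_i$ are already degree-$M$ polynomials in the $N=KdM$ parameters $\beta$ alone, exactly as you had it in your main argument.
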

$\operatorname{Pdim}(\gX_K)$ is computed by treating tensors as polynomials and counting the connected components of these polynomials. Even though any $X\in\gX_K$ has $d^M$ entries, its complexity only scales as $\operatorname{poly}(K, d, M)$. This assures that we only need to see polynomially many source tasks to perform well in unseen domains.
Using the pseudo-dimension, we have the following uniform convergence result.

\begin{restatable}{theorem}{tensorcompletion}
\label{thm:tensorcompletion}
	For any fixed $j\in[p]$, with probability at least $1-\delta$, 
		\begin{align}\label{eq:tensorcompleteunion}
	\begin{split}
			&\frac{1}{D}\sum_{t\in[D]} \norm{\wh\gT_{t,\cdot}- \gT^*_{t,\cdot}}
			\leq \frac{1}{T}\sum_{t\in[T]} \sum_{j\in[p]}\abs{\wh\gT_{t,j}-\tilde\gT_{t,j}}+p\sqrt{\frac{KdM^2\log\p{8ed}+\log(p/\delta)}{T}}+\tilde\cO(n^{-1/4}).
	\end{split}
	\end{align}
\end{restatable}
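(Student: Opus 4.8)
The plan is to view, for each coordinate $j\in[p]$, the recovered slice $\wh\gT_{\cdot,j}$ as a hypothesis from the rank-$K$ class $\gX_K$, whose complexity is controlled by \Cref{lma:pseudodim}, and to show it generalizes from the $T$ observed domains to all $D$ domains by a uniform-convergence argument. First I would use $\norm{v}_2\le\norm{v}_1$ to reduce to the coordinatewise bound
\[
\frac{1}{D}\sum_{t\in[D]}\norm{\wh\gT_{t,\cdot}-\gT^*_{t,\cdot}}\le\sum_{j\in[p]}\frac{1}{D}\sum_{t\in[D]}\abs{\wh\gT_{t,j}-\gT^*_{t,j}},
\]
so it suffices, for each $j$, to control the population recovery error $\frac1D\sum_{t\in[D]}\abs{\wh\gT_{t,j}-\gT^*_{t,j}}$ by its in-sample counterpart over $[T]$ plus a complexity term, and then to split the in-sample error by the triangle inequality into the tensor-completion objective $\frac1T\sum_{t\in[T]}\abs{\wh\gT_{t,j}-\tilde\gT_{t,j}}$ (which appears unchanged on the right of \cref{eq:tensorcompleteunion}) and $\frac1T\sum_{t\in[T]}\abs{\tilde\gT_{t,j}-\gT^*_{t,j}}=\frac1T\sum_{t\in[T]}\abs{(\hat w_t-w_t^*)_j}$, which \Cref{thm:lineardiff} controls.

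For the uniform-convergence step, fix $j$. Since the $T$ observed domains are a uniformly random subset of $[D]$, for every tensor $\gT$ the map $t\mapsto\abs{\gT_{t,j}-\gT^*_{t,j}}$ has population mean $\frac1D\sum_{t\in[D]}\abs{\gT_{t,j}-\gT^*_{t,j}}$, matching \cref{eq:expecttensor}. I would restrict \cref{eq:tensorcompletion} to rank-$K$ tensors whose entries are at most $W$ in absolute value — harmless since $\gT^*_{\cdot,j}$ is rank-$K$ with $\abs{\gT^*_{t,j}}\le\norm{w_t^*}\le W$ by \Cref{assum}, hence feasible — so that every loss value $\abs{\gT_{t,j}-\gT^*_{t,j}}$ is $\cO(W)$. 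The map $z\mapsto\abs{z-\gT^*_{t,j}}$ is $1$-Lipschitz, so by Talagrand contraction the empirical Rademacher complexity of the loss class is at most that of the $j$-th slice class, which is a bounded subset of $\gX_K$; converting the pseudo-dimension bound of \Cref{lma:pseudodim} into an $L_2$ covering-number bound (Pollard) and integrating via Dudley's entropy bound gives Rademacher complexity $\cO\bigl(W\sqrt{KdM^2\log(8ed)/T}\bigr)$. A standard symmetrization-plus-concentration argument — valid also for sampling without replacement, which concentrates at least as sharply as i.i.d.\ sampling — then yields, with probability $1-\delta/p$ and uniformly over this class,
\[
\frac1D\sum_{t\in[D]}\abs{\gT_{t,j}-\gT^*_{t,j}}\le\frac1T\sum_{t\in[T]}\abs{\gT_{t,j}-\gT^*_{t,j}}+\cO\!\left(W\sqrt{\frac{KdM^2\log(8ed)+\log(p/\delta)}{T}}\right).
\]

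To assemble, I would instantiate this at $\gT=\wh\gT_{\cdot,j}$, use $\abs{\wh\gT_{t,j}-\gT^*_{t,j}}\le\abs{\wh\gT_{t,j}-\tilde\gT_{t,j}}+\abs{\tilde\gT_{t,j}-\gT^*_{t,j}}$ on $t\in[T]$, sum over $j\in[p]$, and union-bound over $j$:
\begin{align*}
\frac1D\sum_{t\in[D]}\norm{\wh\gT_{t,\cdot}-\gT^*_{t,\cdot}}
&\le\sum_{j\in[p]}\frac1D\sum_{t\in[D]}\abs{\wh\gT_{t,j}-\gT^*_{t,j}}\\
&\le\frac1T\sum_{t\in[T]}\sum_{j\in[p]}\abs{\wh\gT_{t,j}-\tilde\gT_{t,j}}+\sqrt p\cdot\frac1T\sum_{t\in[T]}\norm{\hat w_t-w_t^*}\\
&\quad+p\sqrt{\frac{KdM^2\log(8ed)+\log(p/\delta)}{T}},
\end{align*}
where I used $\sum_j\abs{\tilde\gT_{t,j}-\gT^*_{t,j}}=\norm{\hat w_t-w_t^*}_1\le\sqrt p\,\norm{\hat w_t-w_t^*}$ and absorbed the constant $W$. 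By \Cref{thm:lineardiff}, $\frac1T\sum_{t\in[T]}\norm{\hat w_t-w_t^*}=\tilde\cO(n^{-1/4})$ once $\mathfrak{G}_{nT}(\Phi)$ and $\overline{\mathfrak{G}}_n(\mathcal{W})$ are plugged in, so the middle term is $\tilde\cO(n^{-1/4})$ and the display above is exactly \cref{eq:tensorcompleteunion}; this is consistent with the noisy-completion reading of \Cref{remark:tensorcomplete}.

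The step I expect to be the main obstacle is the uniform-convergence bound over $\gX_K$: turning the combinatorial pseudo-dimension estimate of \Cref{lma:pseudodim} into the $\sqrt{\operatorname{Pdim}(\gX_K)/T}$ rate requires passing through $L_2$ covering numbers and Dudley's integral, which is only legitimate once the loss class is uniformly bounded — the reason \cref{eq:tensorcompletion} must be (innocuously) constrained to entrywise-bounded rank-$K$ tensors. A lesser subtlety is the without-replacement sampling of the observed domains, handled by the standard fact that it concentrates at least as well as i.i.d.\ sampling; the remaining manipulations — the $\ell_1$--$\ell_2$ comparisons, the triangle inequalities, and the union bound over the $p$ coordinates — are routine.
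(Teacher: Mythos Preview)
Your proposal is correct and follows the same high-level strategy as the paper: bound the pseudo-dimension of $\gX_K$ via \Cref{lma:pseudodim}, convert this to a uniform-convergence bound over the $T$ sampled domains, pass from $\ell_2$ to $\ell_1$ across the $p$ coordinates, union-bound over $j\in[p]$, and absorb the residual discrepancy into $\tilde\cO(n^{-1/4})$ via \Cref{thm:lineardiff}.

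The one genuine difference is in where the triangle inequality is applied. The paper first splits at the \emph{population} level,
\[
\frac1D\sum_{t\in[D]}\norm{\wh\gT_{t,\cdot}-\gT^*_{t,\cdot}}\le\frac1D\sum_{t\in[D]}\norm{\wh\gT_{t,\cdot}-\tilde\gT_{t,\cdot}}+\frac1D\sum_{t\in[D]}\norm{\tilde\gT_{t,\cdot}-\gT^*_{t,\cdot}},
\]
and then runs the uniform-convergence argument on the first term with $\tilde\gT$ as the fixed target; you instead take $\gT^*$ as the target in the uniform-convergence step and only split the \emph{in-sample} term $\frac1T\sum_{t\in[T]}\abs{\wh\gT_{t,j}-\gT^*_{t,j}}$ afterwards. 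Your ordering is slightly cleaner: it only ever uses $\tilde\gT_{t,\cdot}$ on the observed domains $t\in[T]$, where it is literally $\hat w_t$ and \Cref{thm:lineardiff} applies directly, whereas the paper's version needs the hypothetical extension of $\tilde\gT$ to all $D$ domains described in \Cref{remark:tensorcomplete} in order to make sense of $\frac1D\sum_{t\in[D]}\norm{\tilde\gT_{t,\cdot}-\gT^*_{t,\cdot}}$. The paper also does not spell out the boundedness and Dudley-integral details you flag as the main obstacle; it simply cites \citet{srebro2005rank} for the pseudo-dimension $\to\sqrt{\operatorname{Pdim}/T}$ conversion.
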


The last $\cO(n^{-1/4})$ term in \cref{eq:tensorcompleteunion} comes from \Cref{thm:lineardiff}. This is the cost we pay for not knowing the true $\gT_{t, \cdot}^*$. If in each source domains we have infinity number of training samples, then statistically \Cref{thm:lineardiff} recovers the true $\gT_{t, \cdot}^*$. In this case, we only need to observe $T=\operatorname{poly}(p,K,d,M)$ source domains.

\subsection{Main Theorem}
We are now ready to show our main result. 
\begin{restatable}{theorem}{mainthm}
\label{thm:mainthm}
	Let \cref{assum} holds and $w^*=\{w_1^*\ldots, w_T^*\}$ being $(\nu,\epsilon)$-diverse for representation $\phi^*$. Let $\bar d_{\gW, w^*}(\hat\phi, \phi^*)$ be as defined in \cref{def:diverse}. With probability at least $1-3\delta$, the following holds:
	\begin{align*}
		&\frac{1}{D}\sum_{t=1}^D\mathop E_{(x,y)\sim \gD_t}[\ell(\hat w_t\circ\hat\phi(x), y)-\ell( w^*_t\circ\phi^*(x), y)]\leq\frac{LD_{\gX}W}{T}\sum_{t\in[T]} \sum_{j\in[p]}\abs{\hat w_{t,j}-\tilde\gT_{t,j}}\\
		&\quad+LD_{\gX}Wp\sqrt{\frac{KdM^2\log\p{8ed}+\log(p/\delta)}{T}} + \tilde\gO\Bigg(\frac{C(\gW)}{n}+\frac{C(\Phi)}{nT}\Bigg)^{1/4}.
	\end{align*}
\end{restatable}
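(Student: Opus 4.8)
The plan is to assemble the main bound from the three ingredients already established: the representation-learning guarantee (\Cref{thm:repdiv}), the linear-layer recovery bound (\Cref{thm:lineardiff}), and the tensor-completion uniform-convergence result (\Cref{thm:tensorcompletion}). The starting point is to decompose the average excess risk over all $D$ domains by passing through the true representation $\phi^*$. For each $t\in[D]$, write
\begin{align*}
\E_{\gD_t}[\ell(\hat w_t\circ\hat\phi)-\ell(w_t^*\circ\phi^*)] &= \underbrace{\E_{\gD_t}[\ell(\hat w_t\circ\hat\phi)-\ell(\hat w_t\circ\phi^*)]}_{\text{(I): representation error}} + \underbrace{\E_{\gD_t}[\ell(\hat w_t\circ\phi^*)-\ell(w_t^*\circ\phi^*)]}_{\text{(II): classifier error}}.
\end{align*}
Term (I) is controlled, after averaging, by the UARD $d_{\gW}(\hat\phi;\phi^*)$, which by the $(\nu,\epsilon)$-diverse assumption is at most $\bar d_{\gW,w^*}(\hat\phi;\phi^*)/\nu+\epsilon$; plugging in \Cref{thm:repdiv} turns this into the $\tilde\gO((C(\gW)/n + C(\Phi)/(nT))^{1/4})$-type tail (note the $1/4$ power is inherited from the square root already present in \Cref{thm:lineardiff}'s statement, not from \Cref{thm:repdiv} directly — I should double-check the exponent bookkeeping here).

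For term (II), I would use the $L$-Lipschitzness of $\ell(\cdot,y)$ together with the bound $\sup_x\|\phi^*(x)\|\le D_{\gX}$: since $\hat w_t,w_t^*$ are linear functionals, $|\ell(\hat w_t\circ\phi^*(x),y)-\ell(w_t^*\circ\phi^*(x),y)| \le L\,|{(\hat w_t - w_t^*)\cdot\phi^*(x)}| \le L D_{\gX}\|\hat w_t - w_t^*\|$. Averaging over $t\in[D]$ gives $LD_{\gX}\cdot\frac{1}{D}\sum_{t\in[D]}\|\hat w_t - w_t^*\|$. Now I interpret $\hat w_t$ for $t\in[D]$ as $\wh\gT_{t,\cdot}$ (the tensor-completion output, extended to all domains) and $w_t^* = \gT^*_{t,\cdot}$, so this is exactly $\frac{1}{D}\sum_{t\in[D]}\|\wh\gT_{t,\cdot}-\gT^*_{t,\cdot}\|$, and \Cref{thm:tensorcompletion} bounds it (after a union bound over the $p$ slices, which is already folded into that theorem's statement) by the empirical completion loss $\frac{1}{T}\sum_{t\in[T]}\sum_{j\in[p]}|\hat w_{t,j}-\tilde\gT_{t,j}|$ plus $p\sqrt{(KdM^2\log(8ed)+\log(p/\delta))/T}$ plus $\tilde\gO(n^{-1/4})$. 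Multiplying through by $LD_{\gX}W$ — the $W$ enters because the bound on $\|w\|\le W$ is needed to control the range of the tensor entries when invoking the pseudo-dimension generalization bound — yields the first two terms of the claimed inequality, and the residual $\tilde\gO(n^{-1/4})$ gets absorbed into the last term.

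Finally I would collect the failure probabilities: $\delta$ for \Cref{thm:repdiv} (hence for term (I) via \Cref{thm:lineardiff}), $\delta$ for \Cref{thm:lineardiff} feeding the $n^{-1/4}$ residual inside \Cref{thm:tensorcompletion}, and $\delta$ for the tensor-completion uniform bound itself, giving $1-3\delta$ by a union bound. The main obstacle I anticipate is the careful bookkeeping in term (I): one must verify that the UARD — not the TARD or the worst-case quantity — is what actually upper-bounds the average of $\E_{\gD_t}[\ell(\hat w_t\circ\hat\phi)-\ell(\hat w_t\circ\phi^*)]$ over all $D$ domains (including unseen ones), and this is precisely why the supremum over $t\in[D]$ in the definition of $d_{\gW}$, rather than just over seen domains, is essential; reconciling this with the $(\nu,\epsilon)$-diversity definition, which only references the $T$ seen tasks through $\bar d_{\gW,w^*}$, is the delicate step. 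The rest is Lipschitz estimates and substitution.
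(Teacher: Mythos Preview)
Your proposal is correct and follows essentially the same route as the paper's proof: the same two-term decomposition through $\phi^*$, the same use of UARD plus $(\nu,\epsilon)$-diversity and \Cref{thm:repdiv} for the representation term, the same Lipschitz/Cauchy--Schwarz argument reducing the classifier term to $\frac{LD_{\gX}}{D}\sum_{t\in[D]}\|\hat w_t - w_t^*\|$, and then \Cref{thm:tensorcompletion} plus a union bound over the three events. Your caveat about the $1/4$ exponent is warranted (it arises from the $\sqrt{\log(1/\delta)/(nT)}$ term inside the square root of \Cref{thm:lineardiff}, absorbed into the $\tilde\gO$), and your observation that the sup over $t\in[D]$ in the UARD definition is precisely what lets term (I) cover unseen domains is exactly the point.
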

The first two lines correspond to the cost of tensor completion and the last term corresponds to predicting with inaccurate $\hat\phi$. The term $\sum_{t\in[T]} \sum_{j\in[p]}\abs{\hat w_{t,j}-\wh\gT_{t,j}}$ can be exactly computed. Recall $\hat w_t$ is the learned by \cref{eq:erm} and $\wh\gT_{t,\cdot}$ is the estimated linear classifier after tensor completion. As $n$ increases, this difference becomes smaller as $\hat w_t$ becoming close to $w^*_t$.  

\section{Experiments}\label{sec:experiment}

We empirically verify that our proposed model leads to better generalization performance than baseline model with vanilla representation learning on both a variant of MNIST dataset and a fiber sensing dataset. On both datasets, we use LeNet trained on all available training data as the baseline. According to \cite{li2017deeper, li2019feature}, this is a simple yet strong baseline that outperforms most DG methods. Our model is trained in an end-to-end fashion. Instead of finding ERMs in \cref{eq:erm} and perform tensor completion, we directly represent $\hat w_t= \sum_{k=1}^K\opodot_{m=1}^M \hat\alpha_{k, t_m}$, and output
\begin{align}\label{eq:endtoendlowrank}
\begin{split}
	&\hat \alpha_{k, t_m}, \hat\phi
	= \argmin_{\substack{\phi\in\Phi,\\\alpha_{k, t_m}}}\frac{1}{nT}\sum_{t=1}^T\sum_{i=1}^n \ell\p{( \sum_{k=1}^K\opodot_{m=1}^M\alpha_{k, t_m}\circ\phi)(x_{t,i}), y_{t,i}}.
\end{split}
\end{align}

In this way, we can fully exploit the computational convenience of auto-differentiation rather than dealing with the algorithmic difficulty of tensor completion. All models are trained using the cross entropy loss. To prevent overfitting, we stop training of all models on the two-way MNIST dataset as soon as the last $50$ iterations have average loss less than $0.05$, and the training of all models on a four-way fiber sensing dataset is stopped once the last $100$ iterations have average loss less than $0.05$. Throughout the experiments, the Adam optimizer with default learning rate $10^{-3}$ is used. The sensitivity of the regularization parameter is investigated on MNIST data and we set it to $0.05$ for all rest of the experiments. For MNIST dataset, the source and target domain data are augmented from MNIST training and test data respectively. For the fiber sensing dataset, the source and target domain data are collected in separate experimental rounds. Across all results tables, mean accuracy is outside the parenthesis, standard deviation is inside the parenthesis.

\subsection{Two-way MNIST} \label{subsec:mnist}
A variant of MNIST is created by rotation and translation operations. We rotate all MNIST digits by $[-30, -15, 0, 15, 30]$ degrees, and translate all MNIST by $[(-3, 0), (0, -3), (0,0), (0,3), (3,0)]$ pixels, leading to $5\times 5$ domains. 

For our proposed method, we use a simplified low-rank structure on the last two layers of LeNet. Specifically, LeNet has the structure of conv1-pool1-conv2-pool2-fc1-relu-fc2-relu-fc3-sigmoid. We impose the low-rank structure on both fc2 and fc3. 

 We create $11$ linear classifiers for each layer, denote as $s_1,\ldots, s_5, v_1,\ldots, v_5, u$. For task $(i,j)\in 5\times 5$, we use $s_i+v_j+u$ for prediction. This formulation is just a special case of the general formulation in \cref{eq:lowrankform}. Indeed, let $\alpha_1 = [
		s_1,  s_2, s_3 , s_4, s_5 , \bf 1 , \bf 1 ,  \bf 1 , \bf 1  , \bf 1 ]$, $\alpha_2 = [\mathbf{1} , \mathbf{1} ,  \mathbf{1} , \mathbf{1}  , \mathbf{1}, v_1,  v_2, v_3 , v_4, v_5 ]$,  and $\alpha_3 =  [
		u,  u, u, u, u, \bf 1 , \bf 1 ,  \bf 1 , \bf 1  , \bf 1 ]$, where each $w_i, v_i, u, \mathbf{1} \in\R^p$ and  $\alpha_k \in \R^{10}\times \R^p$.
 


Then for any task at $t=(i,j)\in 5\times 5$, its classifier $w_t$ this can be formally written as
$ 	w_t = \sum_{k=1}^3 \alpha_{k, i} \odot \alpha_{k, 5+j},
$ which is the same form as  \cref{eq:lowrankform}. This is done for both fc2 and fc3, and each layer has its own distinct set of weights.

Similar idea has been proposed for DG previously \cite{yang2016deep, li2017deeper}. These previous works do not assume a tensor structure on the tasks. Instead, they put a low-rank tensor structure on the classifiers themselves. This fundamentally distinguishes our settings from previous ones. As a result, during test phase they have to use the common classifier $u$ for different  target domains, but we can incorporate more prior knowledge by using $s_i+v_j+u$. 

During training time, we collect training data from $(i,i)$ entries for all $i\in[5]$, and leave  data in any other domains for test only. This is one of the designs requires the minimal number of source domains, and we can still successfully train all unknown classifiers $s$, $v$ and $u$. Due to the excess amount of learnable parameters, it is easy to overfit on our method. To reduce model complexity, we regularize all learnable classifiers to be close to their mean. That is, on each low-rank layer, let $\mu=\frac{1}{11}(\sum_iv_i+\sum_js_j+u)$, we add the following regularizer to the loss function $ \Omega_\lambda(s, v, u) = \frac{\lambda}{11}\p{\sum_i \|v_i-\mu\|^2+\sum_j\|s_j-\mu\|^2+\|u-\mu\|^2}$.

We run the experiments $10$ times and report the mean performances and standard deviations in \Cref{table:mnist}. Since this method uses the domain description information $(i,j)$ during testing, we refer to it as the \emph{domain-specific} model. In addition, we also report the performance of using just the common classifier $u$ in fc2 and fc3. This model uses no domain information during testing, and we call it the \emph{domain-agnostic} model. The baseline model is LeNet trained with data pooled from all source domains together. Notice that each $s_i+v_j+\nu$ corresponds to a unique task descriptor $q_{i,j}$ that serves as the coefficients for the combination of the linear layers, so for the ZSDA model, we further trained another descriptor network that outputs $\mathrm{ReLU}(Wq)$ as the new coefficients. This is done for both the last two layers. Our method almost uniformly dominates the baseline in every domain, sometimes by a large margin, and almost matches the performance of the ZSDA model, with less parameters. The domain-agnostic model achieves comparable performances to our proposed method. This shows that with the low-rank tensor structure, domain-agnostic models can also accommodate the domain shifts on this dataset.

Another interesting observation is that the performances of all models are better near the diagonal, and getting worse away from the diagonal. This may provide insight into how we should design experiments under a fixed budget of data collection. A conjecture is that the performances on unseen target domains relate to the Manhattan distance of these domains to the seen source domains. Further discussion is deferred to the appendix.

\begin{table}[t]
\caption{Test accuracy with $5$ observed domains on the diagonal. In each cell, from the $1$st to $4$th row: baseline, our domain-agnostic, and domain-specific models, both with the special low-rank formulation, and the general ZSDA model~\cite{yang2014unified}.}
\label{table:mnist}
\begin{center}
\vskip -0.1in
\resizebox{0.9\textwidth}{0.23\textheight}{
\begin{tabu}{c|[0.8pt] c|c|c|c|c}
\toprule
     & (-3, 0) &  (0, -3)& (0,0)& (0,3)& (3,0) \\ \tabucline[0.7pt]{-}
-30 &  &\makecell{ 0.958(0.007) \\ 0.950(0.007) \\0.965(0.004)\\\bf 0.967(0.002)} &\makecell{ 0.927(0.007) \\ 0.932(0.010) \\\bf 0.943(0.009) \\0.936(0.006)} &\makecell{ 0.735(0.017) \\ 0.769(0.024) \\\bf 0.775(0.024) \\0.769(0.022)} &\makecell{ 0.585(0.016) \\  0.659(0.037) \\0.646(0.038) \\\bf0.671(0.021)} \\ \hline
-15 & \makecell{ 0.975(0.003) \\ \bf 0.978(0.003) \\0.977(0.004) \\0.976(0.005)} & &\makecell{ 0.974(0.002) \\ 0.973(0.004) \\\bf 0.975(0.003) \\0.974(0.004)} &\makecell{ 0.908(0.005) \\ 0.907(0.010) \\ 0.911(0.010) \\ \bf 0.913(0.012)} &\makecell{ 0.797(0.009) \\ 0.846(0.018) \\0.839(0.015) \\\bf 0.852(0.012)} \\ \hline
0 & \makecell{ 0.925(0.012) \\ \bf 0.951(0.008) \\0.950(0.009) \\0.945(0.011)} &\makecell{ 0.969(0.004) \\ 0.966(0.005) \\\bf 0.971(0.005) \\\bf 0.971(0.004)} & &\makecell{ 0.973(0.002) \\ 0.971(0.004) \\ 0.976(0.002) \\\bf 0.977(0.001)} &\makecell{ 0.935(0.007) \\ 0.947(0.007) \\\bf 0.952(0.005) \\\bf 0.952(0.004)} \\ \hline
15 & \makecell{ 0.739(0.038) \\ \bf 0.810(0.029) \\0.801(0.027) \\0.804(0.019)} &\makecell{ 0.861(0.023) \\ 0.863(0.013) \\0.866(0.013) \\\bf 0.882(0.017)} &\makecell{ 0.974(0.003) \\ 0.971(0.007) \\\bf 0.978(0.003) \\\bf 0.978(0.001)} & &\makecell{ 0.975(0.003) \\ 0.975(0.002) \\0.977(0.002) \\\bf 0.978(0.002)} \\ \hline
30 & \makecell{ 0.494(0.039) \\ \bf 0.576(0.048) \\0.573(0.045) \\0.568(0.022)} &\makecell{ 0.649(0.027) \\ 0.664(0.018) \\0.681(0.024) \\\bf 0.715(0.018)} &\makecell{ 0.919(0.010) \\ 0.917(0.021) \\\bf 0.942(0.008) \\0.938(0.008)} &\makecell{ 0.955(0.004) \\ 0.930(0.012) \\\bf 0.956(0.006) \\0.947(0.016)} & \\
\bottomrule
\end{tabu}
}
\end{center}
\vskip -0.2in

\end{table}

\subsection{Four-way Fiber Sensing Dataset}

The distributed optic fiber sensing technique turns underground cable of tens of kilometers into a linear array of sensors, which could be used for traffic monitoring in smart city applications. In this paper, we aim to build a classifier for automatic vehicle counting and run-off-road event detection. 

The objective is to classify whether the sensed vibration signal is purely ambient noise, or it contains vibration caused by a vehicle either driving normally or crossing the rumbling stripes alongside the road. The sensing data takes the form of a 2D spatio-temporal array that can be viewed as an image - each pixel represents the vibration intensity at a particular time and location along the cable. Vehicles driving on the road,  running-off-road, and ambient noise all create different patterns, which makes convolutional neural networks a natural choice for a $3$-class classification model.  


In experimental design, we consider several influencing factors representing the practical challenges faced after field deployment. These factors include weather-ground conditions (sunny-dry, rainy-wet), shoulder type (grass, concrete), sensing distance ($0$km, $10$km, $15$km), and vehicle type (van, sedan, truck), which produce combined effects on the data domain. In order to evaluate our approach, we spent efforts on collecting data with all possible combinations of the levels in the aforementioned factors in a lab testbed, resulting to a multiway data indexed by a $2\times 3\times 3\times 2$ tensor. Under each condition, we conduct $20\sim 25$ rounds for the run-off-road events and normal driving. The ambient noise data is collected when no vehicles are near the cable. The classes are balanced in both training and test data. 


\begin{figure}[bpht]
\vskip -0.05in
\begin{center}
\tiny{$
\begin{array}{cc}
\hspace{-0.4cm}\includegraphics[height=0.35\textwidth, width=0.34\textwidth]{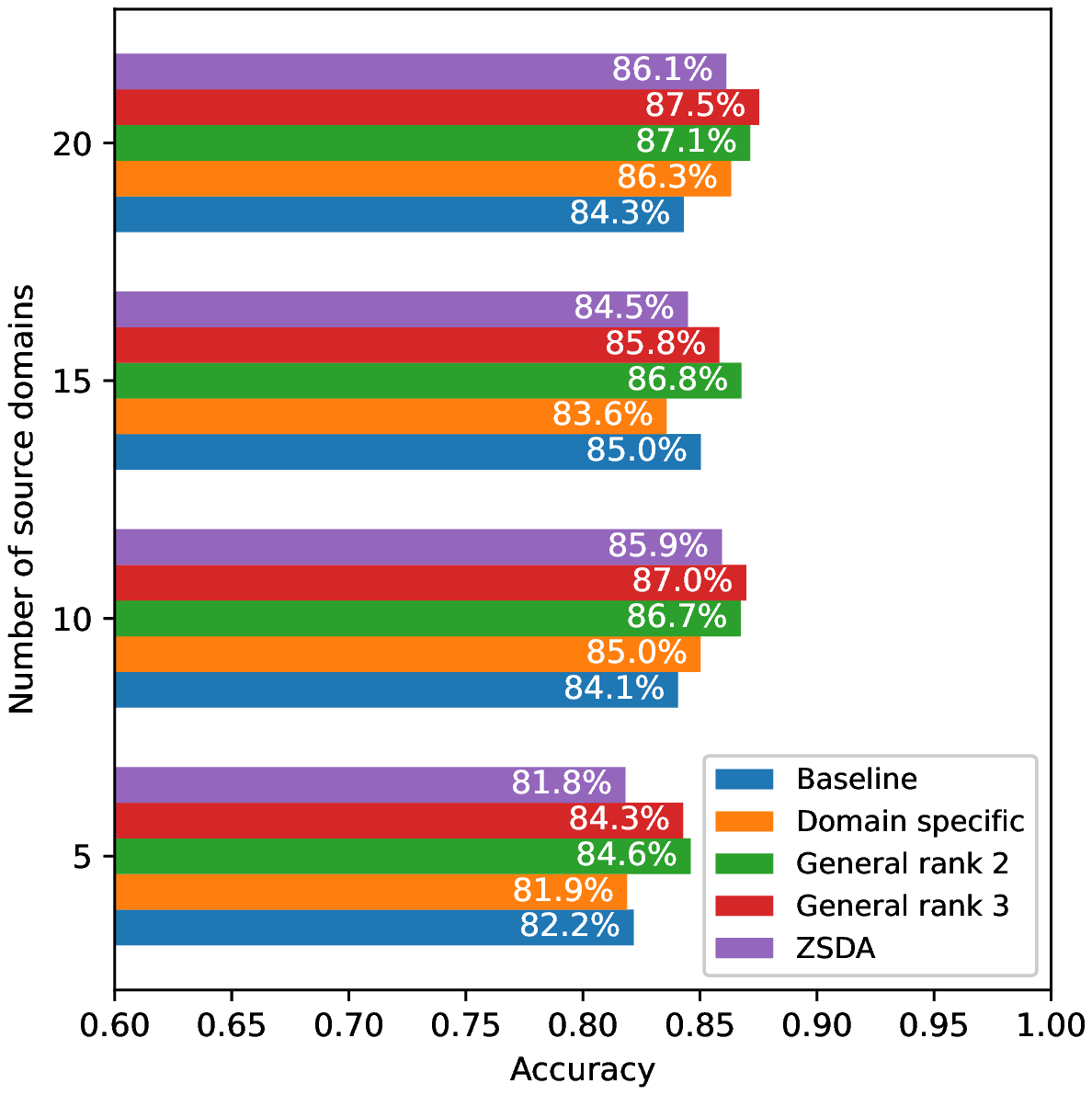} & \includegraphics[height=0.35\textwidth, width=0.65\textwidth]{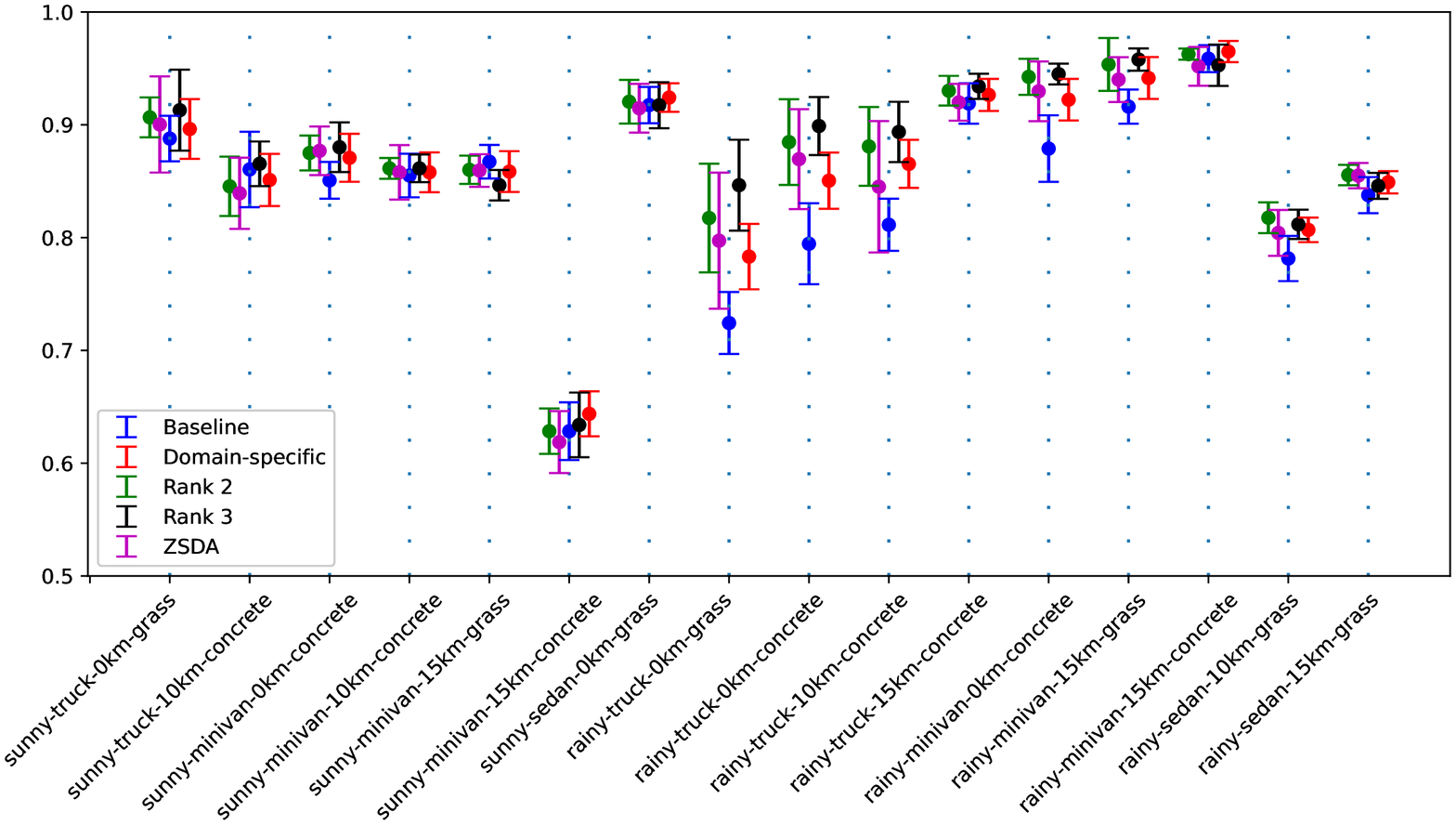} 
\\ (a) & (b)
\end{array}$}
\end{center}
\caption{\textbf{Left:} Average test accuracy vs. number of source domains. \textbf{Right:} Test accuracy, trained on $20$ random observed domains.}
\label{fig:fiber_sensing}
\end{figure}

We evaluate model performance against number of source domains ($5, 10, 15$, or $20$, each with $10$ simulation runs) and results are shown in Figure \ref{fig:fiber_sensing}(a). The performance increases with number of source domains as expected.  In particular, Figure \ref{fig:fiber_sensing}(b) summarizes the test performance for models trained with $10$ source domains and tested on the rest. In each run, the observed source domains are randomly sampled. Among the $10$ runs, "rainy-sedan-15km-grass" happens to be always selected as the source domain, thus the test result is not available for this domain. 

We add two additional low-rank models in the form of \cref{eq:endtoendlowrank} with $K=2$ and $3$ just in the last layer for comparison. Among the combination of factor levels, some scenarios are more challenging than others. For example, sedan is the lightest vehicle among the three, which generates much weaker vibration than a truck. Also the signal-to-noise ratio decays with increasing sensing distance. Results show that in most of the cases, our DG models achieve improved performance over the baseline, and the margins over baseline are particularly large in several challenging scenarios.




\section{Conclusion}
In this work, we propose a particular domain adaptation framework where $T$ out of $D=d^M$ total domains are observed during training. All $D$ domains are parameterized by a common latent representation and their domain-specific linear functionals, which form a $d^{\times M}$-dimensional low-rank tensor. This multilinear structure allows us to achieve an average excess risk of order $\tilde\gO\p{\p{\frac{TC(\gW)+C(\Phi)}{nT}}^{1/4}+p\p{\frac{KdM^2}{T}}^{1/2} }$. In addition to domain adaptation, our setting also sheds light on more efficient experiment designs and data augmentations. Algorithms developed under our framework are empirically verified on both benchmark and real-world datasets.

\section*{Acknowledgement}
SSD acknowledges support
from NEC. SH would like to thank Yuheng Chen, Ming-Fang Huang, and Yangmin Ding from NEC Labs America for their help with the fiber sensing data collection. 
\bibliographystyle{unsrtnat}
\bibliography{example_paper}

\newpage
\appendix

 \onecolumn
\section{Proofs in \Cref{sec:theory}}
\lineardiff*

\begin{proof}
	By a standard decomposition, we have with probability at least $1-2\delta$
	\begin{align*}
		&\frac{1}{T}\sum_{t\in[T]} \p{L(\hat w_t\circ\hat\phi)-L(w_t^*\circ\phi^*)}\\
		=&\frac{1}{T}\sum_{t\in[T]} \p{\E_{\gD_t}[\ell(\hat w_t\circ\hat\phi)]-\E_{\gD_t}[(w_t^*\circ\phi^*)]}\\
		\leq&\frac{1}{T}\sum_{t\in[T]} \E_{\gD_t}[\ell(\hat w_t\circ\hat\phi)]- \frac{1}{T}\sum_{t\in[T]} \sum_{i\in[n]}\ell((\hat w_t\circ\hat\phi)(x_{t,i}), y_i) \\
		& + \frac{1}{T}\sum_{t\in[T]} \sum_{i\in[n]}\ell((\hat w_t\circ\hat\phi)(x_{t,i}), y_i) - \frac{1}{T}\sum_{t\in[T]} \sum_{i\in[n]}\ell(( w_t^*\circ\phi^*)(x_{t,i}), y_i)\\
		& +\frac{1}{T}\sum_{t\in[T]} \sum_{i\in[n]}\ell(( w_t^*\circ\phi^*)(x_{t,i}), y_i) - \E_{\gD_t}[\ell(w_t^*\circ\phi^*)]\\
		\leq &2 \sup_{\substack{w_1,\ldots, w_T\in\gW,\\ \phi\in\Phi}}\abs{\frac{1}{T}\sum_{t\in[T]} \E_{\gD_t}[\ell( w_t\circ\phi)]- \frac{1}{T}\sum_{t\in[T]} \sum_{i\in[n]}\ell(( w_t\circ\phi)(x_{t,i}), y_i)}\\
		\leq&4\fR_{nT}(\ell(\gW^{\otimes T}\circ\Phi))+4B\sqrt{\frac{\log(1/\delta)}{nT}}\\
		\leq&8L\fR_{nT}(\gW^{\otimes T}\circ\Phi)+8B\sqrt{\frac{\log(1/\delta)}{nT}}
	\end{align*}
	Use the fact that $\wh \fR_n(\gH)\leq \sqrt{\pi/2} \wh\fG_n(\gH)$ for any function class $\gH$ and the decomposition of $\fR_{nT}(\gW^{\otimes T}\circ\Phi)$ introduced in \citet{tripuraneni2020theory}, we conclude that
	\begin{align*}
		&\frac{1}{T}\sum_{t=1}^T\p{L(\hat w_t\circ \hat\phi) - L(w_t^*\circ\phi^*)}\leq \underbrace{4096 L\left[\frac{WD_{\mathcal{X}}}{(n T)^{2}}+\log (n T) \cdot\left[W \cdot \mathfrak{G}_{n T}(\Phi)+\overline{\mathfrak{G}}_{n}(\mathcal{W})\right]\right]
		+8 B \sqrt{\frac{\log (2 / \delta)}{n T}} }_\text{\textcircled{1}}\\
	\end{align*}
	Meanwhile, by our assumption of uniform absolute representation difference and $(\nu,\epsilon)$-task diversity:
	\begin{align*}
		\frac{1}{T}\sum_{t=1}^T\p{L(\hat w_t\circ \phi^*) - L(\hat w_t\circ\hat\phi)}\leq \frac{1}{T}\sum_{t=1}^T\abs{L(\hat w_t\circ \phi^*) - L(\hat w_t\circ\hat\phi)}\leq d_\gW(\hat \phi; \phi^*) \leq \bar d_{\gW, w^*}(\hat \phi, \phi^*)/\nu+\epsilon,
	\end{align*}
	which gives
	\begin{align*}
		&\frac{1}{T}\sum_{t=1}^T\p{L(\hat w_t\circ \phi^*) - L(w_t^*\circ\phi^*)} = \frac{1}{T}\sum_{t=1}^T\p{L(\hat w_t\circ \phi^*) - L(\hat w_t\circ\hat\phi)} + \frac{1}{T}\sum_{t=1}^T\p{L(\hat w_t\circ \hat\phi) - L( w_t^*\circ\phi^*)}\\
		& \leq  \textcircled{1} + \textcircled{1}/\nu+\epsilon,
	\end{align*}
	where we use $d_{\gW, w^*}(\hat \phi, \phi^*)\leq \textcircled{1}$ from \Cref{thm:repdiv}.
	
%
	By strong convexity, we conclude for any $T\in[D]$, $\nu\leq 1$:
	\begin{align*}
		&\frac{1}{T}\sum_{t\in[T]}\|\hat w_t-w^*_t\|_2 \leq \sqrt{\frac{2}{\lambda}} \p{\frac{1}{T}\sum_{t\in[T]}\p{L(\hat w_t\circ \phi^*) - L(w_t^*\circ\phi^*)}}^{1/2}\\
		&\leq \sqrt{\frac{4}{\lambda\nu}}\Bigg(4096 L\Big[\frac{WD_{\mathcal{X}}}{(n T)^{2}}+\log (n T) \cdot\left[W \cdot \mathfrak{G}_{n T}(\Phi)+\overline{\mathfrak{G}}_{n}(\mathcal{W})\right]\Big] +8 B \sqrt{\frac{\log (2 / \delta)}{n T}}+\epsilon\Bigg)^{1/2}.
	\end{align*}

\end{proof}

Now we proceed to show the results related to tensor completion. The main idea is to treat a tensor as a polynomial, and count its connect components. This number restricts the complexity of the tensor.

\begin{corollary}\label{cor:numconfig}
	The number of $\{\pm 1, 0\}$ sign configurations of $r$ polynomials of degree at most $d$, over $q$ variables, is at most $(8edr/q)^q$ for $r>q>2$.
\end{corollary}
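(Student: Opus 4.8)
The strategy is to reduce the count of ternary sign configurations in $\{+1,-1,0\}^r$ to a count of \emph{strict} binary sign configurations in $\{+1,-1\}^{2r}$, and then invoke the classical Warren--Milnor--Thom bound on the number of connected components (equivalently, strict sign patterns) of a real polynomial system. Concretely, I would first recall the standard fact that for $m$ polynomials of degree at most $d$ in $q$ real variables with $m \ge q$, the number of connected components of the open set where none of them vanishes---and hence the number of realizable patterns in $\{+1,-1\}^m$---is at most $(4edm/q)^q$. The entire argument then amounts to producing an injection from realizable ternary configurations of $p_1,\dots,p_r$ into realizable strict binary configurations of a \emph{doubled} system of $2r$ polynomials, and reading off the constant.

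\textbf{The doubling step.} For a parameter $\gamma>0$ I would consider the $2r$ polynomials $\{\,p_i+\gamma,\ p_i-\gamma : i\in[r]\,\}$, each of degree at most $d$ in the same $q$ variables. The key elementary observation is that at any point $x$ for which $|p_i(x)|>\gamma$ whenever $p_i(x)\neq 0$, the ternary sign of $p_i(x)$ is recoverable from the pair of strict signs $(\operatorname{sign}(p_i(x)+\gamma),\operatorname{sign}(p_i(x)-\gamma))$: a positive value yields $(+,+)$, a negative value yields $(-,-)$, and a zero yields $(+,-)$. These three outcomes are distinct, so the map sending a ternary configuration of $(p_1,\dots,p_r)$ to the induced strict configuration of the doubled system is injective.

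\textbf{Choosing $\gamma$ and finishing.} The only nuisance is that the perturbation must be valid simultaneously at a witness for every realizable ternary configuration. Since there are at most $3^r$ ternary configurations in total, only finitely many are realizable; I would fix one witness point $x_s$ for each realizable configuration $s$ and take $\gamma := \tfrac12\,\min_s \min_{i\,:\,p_i(x_s)\neq 0} |p_i(x_s)| > 0$, which satisfies the hypothesis of the doubling observation at every witness at once, so the injection above is well defined. Applying the Warren bound to the doubled system of $m=2r$ polynomials (legitimate because $r>q$ forces $2r\ge q$) gives at most $(4ed\,(2r)/q)^q = (8edr/q)^q$ strict configurations, which by injectivity bounds the number of ternary configurations. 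The main obstacle is really just making the perturbation airtight---pinning down finiteness (trivially via the $3^r$ bound) so that a single positive $\gamma$ serves all witnesses---after which the result is immediate from the cited estimate; matching the stated constant $8e$ hinges precisely on feeding $2r$ polynomials into the $4e$-form of Warren's theorem, and the mild hypothesis $q>2$ is inherited from the regime where that estimate is stated.
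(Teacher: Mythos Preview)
The paper does not actually prove this corollary: it is stated in the appendix as a known fact and then immediately applied in the proof of \Cref{lma:pseudodim}. (It is the standard extension of Warren's bound to ternary sign patterns, as used for instance in \citet{srebro2005rank}, which the paper cites for the surrounding argument.) Your proposal is therefore not competing with any proof in the paper; it is supplying one.

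The argument you give---reduce ternary patterns to strict binary patterns by replacing each $p_i$ with the pair $p_i\pm\gamma$, choose $\gamma$ below the minimum nonzero absolute value over a finite set of witnesses, and then invoke the $(4edm/q)^q$ form of Warren's theorem with $m=2r$---is exactly the canonical proof of this statement in the literature, and it is correct. The constant $8e$ falls out precisely because of the doubling, and the side conditions $r>q>2$ are the ones under which the cited form of Warren's bound holds. There is nothing to add.
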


\pseudodim*
\begin{proof}
	Let $f_T(d, M, K)=\abs{\{\operatorname{sign}(X-T) \in \{\pm 1,  0\}^{d^{\times M}}: X\in \gX_K\}}$. It suffices to show that $f_T(d, M, K)\leq (8ed)^{KdM^2}$. A rank $K$ $d^{\times M}$ tensor can be decomposed as:
	\[
		X_{t}= \sum_{k=1}^K\prod_{m=1}^M U_{k, t_m}
	\]
	for $t\in [d]^{\times M}$. Then one can treat $X-T$ as $d^M$ polynomials of degree at most $M$ over the following entries:
	\[
		(X-T)_t =  \sum_{k=1}^K\prod_{m=1}^M U_{k, t_m} - T_t.
	\]
	$T$ is a fixed arbitrary tensor, so there are in total $KdM$ variables. Applying \Cref{cor:numconfig} yields the desired result.
\end{proof}

\tensorcompletion*

\begin{proof}
	The following equation follows from the uniform convergence bound using covering number and bounding covering number using pseudo-dimension. See \citet{srebro2005rank} for detail. 
	
	\begin{align}\label{eq:tensorcomplete}
	\begin{split}
			\frac{1}{D}\sum_{t\in[D]} \abs{\wh\gT_{t,j}-\tilde \gT_{t,j}}\leq \frac{1}{T}\sum_{t\in[T]} \abs{\wh\gT_{t,j}-\tilde\gT_{t,j}}+\sqrt{\frac{KdM^2\log\p{8ed}-\log\delta}{T}}.
	\end{split}
	\end{align}
	
	Then by triangle inequality and equivalence of norms in finite-dimensional spaces
	\begin{align*}
			\frac{1}{D}\sum_{t\in[D]} \norm{\wh\gT_{t,\cdot}- \gT^*_{t,\cdot}}&\leq \frac{1}{D}\sum_{t\in[D]} \norm{\wh\gT_{t,\cdot}- \tilde\gT_{t,\cdot}}+\frac{1}{D}\sum_{t\in[D]} \norm{\tilde\gT_{t,\cdot}- \gT^*_{t,\cdot}}\\
			&\leq \frac{1}{T}\sum_{t\in[T]} \sum_{j\in[p]}\abs{\wh\gT_{t,j}-\tilde\gT_{t,j}}+p\sqrt{\frac{KdM^2\log\p{8ed}+\log(p/\delta)}{T}}+\cO(n^{-1/4}),
	\end{align*}
	where the first step is from the fact that $\|\cdot\|_2\leq \|\cdot\|_1$ and union bounding \cref{eq:tensorcomplete} over $p$ events. The $\gO(n^{-1/4})$ term follows from \Cref{thm:lineardiff}. Again, as mentioned in \Cref{remark:tensorcomplete}, one should imagine that every entry in $\tilde \gT$ has been hypothetically learned to $\epsilon(n)$ close to $\gT^*$, but only $T$ entries have been revealed to the learner during the tensor completion step. Keep in mind that $D-T$ entries of $\tilde\gT$ only exist hypothetically and are never computed.
\end{proof}

\mainthm*
\begin{proof}
	\begin{align*}
		&\frac{1}{D}\sum_{t=1}^D\mathop E_{(x,y)\sim \gD_t}[\ell(\hat w_t\circ\hat\phi(x), y)-\ell( w^*_t\circ\phi^*(x), y)]\\
		= & \underbrace{ \frac{1}{D}\sum_{t=1}^D\mathop E_{(x,y)\sim \gD_t}[\ell(\hat w_t\circ\hat\phi(x), y)-  \ell( \hat w_t\circ\phi^*(x), y)] }_{A}+\underbrace{\frac{1}{D}\sum_{t=1}^D\E_{\gD_t}[\ell( \hat w_t\circ\phi^*(x), y)- \ell( w^*_t\circ\phi^*(x), y)]}_\text{B}\\
	\end{align*}
	
	By our assumption of uniform absolute representation difference and $(\nu, \epsilon)$-diverse, we can upper bound $A$ by $ \bar d_{\gW,w^*}(\hat\phi, \phi^*)/\nu+\epsilon$. The second term can be bounded by lipschitzness,
	\begin{align*}
		B&\leq \frac{1}{D}\sum_{t\in[D]}\E_{\gD_t}\left[ L\abs{\hat w_t\circ\phi^*(x)-w^*_t\circ\phi^*(x)} \right]\\
				&\leq \frac{1}{D}\sum_{t\in[D]} \E_{\gD_t}\left[ L\|\hat w_t-w^*_t\| \|\phi^*(x)\| \right]\\
				&\leq \frac{LD_\gX}{D}\sum_{t\in[D]} \|\hat w_t-w^*_t\|.
	\end{align*}
	
	Plug in our approximation to $d_{\gW,w^*}(\hat\phi, \phi^*)$ in \Cref{thm:repdiv}, $ \frac{1}{D}\sum_{t\in[D]} \|\hat w_t-w^*_t\|$ in \Cref{thm:mainthm}, and union bound, we conclude the theorem.
\end{proof}

\section{More Discussion to \Cref{sec:experiment}}

We mention in \Cref{subsec:mnist} that the test accuracy on unseen domain might relate to the Manhattan distance between the seen and unseen domains. Here is another experiment in the same setting as in \cref{subsec:mnist}, but the chosen observed entries are [(0,0), (1,3), (2,2), (3,1), (4,4)].

\begin{table}
\caption{Mean test accuracy for our method (both domain-specific and domain-agnostic) and baseline. Same settings as in \Cref{table:mnist} but difference domains are observed.}
\label{table:mnist_design2}
\vskip 0.1in
\centering
\resizebox{\textwidth}{0.165\textheight}{

\begin{tabu}{c|[0.8pt] c|c|c|c|c}
\toprule
 \diagbox{\textsc{Rotation}}{\textsc{Translation}}      & (-3, 0) &  (0, -3)& (0,0)& (0,3)& (3,0) \\ \tabucline[0.7pt]{-}
-30 &  & \makecell{ 0.740(0.021) \\ 0.767(0.024) \\0.796(0.022)} &\makecell{ 0.932(0.009) \\ 0.940(0.008) \\0.947(0.008)} &\makecell{ 0.960(0.005) \\ 0.958(0.007) \\0.965(0.003)} &\makecell{ 0.604(0.025) \\ 0.654(0.044) \\0.644(0.043)} \\ \hline
-15 & \makecell{ 0.971(0.003) \\ 0.977(0.003) \\0.977(0.003)} &\makecell{ 0.906(0.012) \\ 0.915(0.007) \\0.914(0.007)} &\makecell{ 0.976(0.002) \\ 0.978(0.002) \\0.980(0.002)} & & \makecell{ 0.829(0.016) \\ 0.860(0.021) \\0.866(0.019)} \\ \hline
0 & \makecell{ 0.919(0.011) \\ 0.953(0.006) \\0.950(0.007)} &\makecell{ 0.973(0.004) \\ 0.972(0.003) \\0.975(0.003)} & & \makecell{ 0.969(0.003) \\ 0.961(0.007) \\0.969(0.005)} &\makecell{ 0.948(0.006) \\ 0.952(0.006) \\0.955(0.006)} \\ \hline
15 & \makecell{ 0.756(0.024) \\ 0.844(0.020) \\0.813(0.019)} & & \makecell{ 0.966(0.005) \\ 0.969(0.005) \\0.967(0.008)} &\makecell{ 0.861(0.016) \\ 0.844(0.024) \\0.853(0.025)} &\makecell{ 0.978(0.002) \\ 0.976(0.002) \\0.977(0.003)} \\ \hline
30 & \makecell{ 0.554(0.021) \\ 0.657(0.029) \\0.656(0.028)} &\makecell{ 0.958(0.006) \\ 0.956(0.005) \\0.966(0.005)} &\makecell{ 0.903(0.021) \\ 0.914(0.018) \\0.940(0.011)} &\makecell{ 0.656(0.025) \\ 0.632(0.027) \\0.691(0.030)} & \\
\bottomrule
\end{tabu}
}
\vskip -0.1in
\end{table}

In both \Cref{table:mnist_design2} and \Cref{table:mnist}, every factor level has been observed exactly once. If the factors are categorical nominal, then permuting rows $2$ and row $4$ in \Cref{table:mnist_design2} leads to the same balanced design with \Cref{table:mnist}, and the performance shall be similar. However, the factors (rotation, translation) considered here are not nominal, for example, there is a ordinal relation between rotation $-30$ and rotation $-15$. Hence, the above-mentioned permutation seems prohibitive. Consequently, it makes sense to talk about Manhattan distances between observed and unobserved entries.

In \Cref{table:mnist}, cell (4,0) is 4 units away from its closest observed cell in Manhattan distance (in the following we omit to mention the metric is Manhattan distance), but in \Cref{table:mnist_design2}, (4,0) is 2 units away from the closest observed entry. One can see the accuracy in that domain gets much better. However, the closest distance to the observed entry is not the only factor here. For example, (0,1) in \Cref{table:mnist} outperforms (0,1) in \Cref{table:mnist_design2} a lot. Therefore, the average distance to the observed entries may also be a contributing factor.

It is an open question how to select the best subset of tensor entries to be observed, such that the overall performance in the unseen target domains can be optimized. This question may relate to the area of factorial experiment design, where many factors are involved and some have confounded interactions. One subject is to design minimal sets of experiments such that the effects of all factors can be studied. Our theory deals particularly with the case when the  factor combinations are chosen uniformly at random, without assuming any particular tensor structure. With more prior knowledge, more efficient sampling algorithms can be designed. For instance, \citet{mckay2000comparison} discussed how to use Latin hypercube sampling to achieve a smaller sample complexity. There are also some works that connect the dots between error coding theory and factorial experiment design \cite{ben2001application}. There could be information theoretic understanding to the relation between our setting of experiment design and Manhattan distance as well. This can be an interesting research direction and beyond the scope of our paper.

\paragraph{Hyperparameter Sensitivity} Model selection in domain adaptation is tricky in general, since no data from the target domains is seen. One benefit of using the special low-rank formulation is that it has fewer number of tuning parameters than general low-rank formulations. In addition to tuning $\lambda$ in the regularizer $\Omega_\lambda$, the form in \cref{eq:endtoendlowrank} also requires to choose rank $K$.  We evaluate the sensitivity of $\lambda$ on both the source and target domains. Observing data from $(i,i)$ domains on the diagonal, we train on $5000$ training data using $\lambda\in[0.005,0.01,0.03,0.05,0.1,0.5,1]$, and test on $1000$ data from both $(i,i)$ source domains for $i\in[5]$ and $(i,j)$ task domains for $i\neq j \in[5]$.  Results show that the test performances on both source domains and target domains are insensitive to $\lambda$. The mean performances and standard deviations are reported in \Cref{table:model_selection}.

\begin{table}[htbp]
\caption{Hyperparameter sensitivity.}
\label{table:model_selection}
\centering
\vskip 0.1in
\begin{tabu}{ccc}
\toprule
$\lambda$ & \textsl{Source Domain} & \textsl{Target Domain}\\ \midrule
0.005 & 0.963 (0.003) & 0.822 (0.005) \\ \hline
0.01 & 0.964 (0.003) & 0.821 (0.010)\\ \hline
0.03 & 0.962 (0.003)& 0.824 (0.006)\\ \hline
0.05 & 0.963 (0.003)& 0.818 (0.007)\\ \hline
0.1 & 0.959 (0.003)& 0.823 (0.008)\\ \hline
0.5 & 0.961 (0.005)& 0.810 (0.010)\\ \hline
1 & 0.962 (0.004)& 0.811 (0.001)\\ 
\bottomrule
\end{tabu}
\vskip -0.1in
\end{table}

\end{document}